\theoremstyle{definition}
\newtheorem{teiri}{Theorem}
\newtheorem{hodai}{Lemma}
\newtheorem{teigi}{Definition}
\newcommand{\annot}[2]{\underbrace{#1}_{\text{#2}}}
\def\BibTeX{{\rm B\kern-.05em{\sc i\kern-.025em b}\kern-.08em
    T\kern-.1667em\lower.7ex\hbox{E}\kern-.125emX}}
\begin{document}

\title{CFTM: Continuous time fractional topic model
}

\author{\IEEEauthorblockN{1\textsuperscript{st} Kei Nakagawa}
\IEEEauthorblockA{\textit{Innovation Lab} \\
\textit{Nomura Asset Management Co, Ltd.}\\
Tokyo, Japan \\
kei.nak.0315@gmail.com}
\and
\IEEEauthorblockN{2\textsuperscript{nd} Kohei Hayashi}
\IEEEauthorblockA{
\textit{RIKEN}\\
Saitama, Japan\\
0000-0002-2237-0032
}
\and
\IEEEauthorblockN{3\textsuperscript{rd} Yugo Fujimoto}
\IEEEauthorblockA{\textit{Innovation Lab} \\
\textit{Nomura Asset Management Co, Ltd.}\\
Tokyo, Japan \\
yu5fujimoto@gmail.com}
}

\maketitle
\begin{abstract}
In this paper, we propose the Continuous Time Fractional Topic Model (cFTM), a new method for dynamic topic modeling. This approach incorporates fractional Brownian motion~(fBm) to effectively identify positive or negative correlations in topic and word distribution over time, revealing long-term dependency or roughness. 
Our theoretical analysis shows that the cFTM can capture these long-term dependency or roughness in both topic and word distributions, mirroring the main characteristics of fBm. 
Moreover, we prove that the parameter estimation process for the cFTM is on par with that of LDA, traditional topic models.
To demonstrate the cFTM's property, we conduct empirical study using economic news articles. 
The results from these tests support the model's ability to identify and track long-term dependency or roughness in topics over time.
\end{abstract}

\section{Introduction}
A topic model is a type of probabilistic generative model specifically designed for text analysis. 
In the framework of a topic model, the creation of a document is viewed as a two-step generative process. Initially, a topic is chosen based on a predefined topic distribution. 
Subsequently, words in the document are selected in accordance with the word distribution associated with the chosen topic. 
Here, the \textit{topic distribution} refers to the relative presence of each topic within a specific document, while the \textit{word distribution} denotes the frequency and arrangement of words constituting each topic.
The core of topic modeling involves statistically inferring both the topic and word distributions from a set of observed documents. This inference is predicated on the assumption that these documents have been generated according to the rules of a topic model. Through this process, it becomes possible to discern not only the proportion of various topics within the documents but also the specific word distributions that characterize each topic.
Among the various approaches to topic modeling, Latent Dirichlet Allocation~(LDA), as introduced by \cite{blei2003latent}, stands out due to its widespread adoption and effectiveness\cite{jelodar2019latent}.
Additionally, there have been significant advancements in this field, including models that account for correlations among topics, as discussed by \cite{blei2007correlated}. 
These developments reflect the ongoing evolution and increasing sophistication of topic modeling methodologies in document analysis.

A notable limitation of Latent Dirichlet Allocation~(LDA) is its static nature; it does not account for the time evolution of topic and word distribution. 
This shortcoming makes it challenging to track and analyze the changes in topics over time, which can be a critical aspect in many applications, especially those involving time-series documents. 
To address this issue, the Dynamic Topic Model~(DTM) was proposed by \cite{blei2006dynamic}, representing a significant advancement in the field of topic modeling\cite{abdelrazek2023topic}.
The DTM extends the traditional topic model framework by incorporating temporal dynamics. 
In this approach, the dataset is partitioned into segments corresponding to equal time intervals. Crucially, both the topic distribution and the word distribution within each topic are allowed to evolve over time. 
This temporal modeling capability enables DTM to capture and reflect the dynamic nature of topics, allowing for a more accurate analysis of documents that change over time.

Building upon the discrete time framework of DTM, the Continuous Time Dynamic Topic Model (cDTM) was proposed by \cite{wang2008continuous} as a significant extension. 
While DTM effectively models changes in topics at distinct, separate time points, cDTM advances this concept by allowing for modeling in a continuous time framework. 
In the DTM and cDTM, the parameters governing word and topic generation at each time point evolve over time. This evolution is modeled by adjusting the parameters at each subsequent time step through the addition of random variables, which independently follow a normal distribution. This method implies that the increments in the generative parameters are independent at each point in time.

However, empirical observations suggest that the evolution of topics and their associated word distributions often exhibit positive correlations over time. For instance, the emergence of new words or the prominence of words specific to a new topic at a given time point can influence the topic and word distributions in subsequent observations. This observation is supported by various studies \cite{hong2010empirical,wang2007mining,wang2006topics,leskovec2009meme}, which indicate that assuming a positive correlation in the changes of generative parameters can lead to a more realistic and accurate model of document generation.
Conversely, there are scenarios where topics that are prevalent at one time point may become obsolete or less discussed in the next, or where entirely new topics emerge abruptly. In such cases, it's critical to capture the negative correlation or 'roughness' in the topic distribution.

Therefore, In this study, we propose the Continuous-Time Fractional Topic Model~(cFTM), a novel topic model that encapsulates the long-term dependencies or roughness in the generative parameters of topic and word distribution. 
The cFTM distinguishes itself by employing fractional Brownian motion (fBm\cite{mandelbrot1968fractional}) as its underlying process. Fractional Brownian motion is a generalization of standard Brownian motion, characterized by its capacity to incorporate long-term memory or roughness, making it an ideal choice for modeling the incremental process of generative parameters in topic models.
The cFTM can be viewed as an extension and generalization of the cDTM. 
This relationship highlights cFTM's ability to model sequential time-series data with an arbitrary level of granularity, similar to cDTM, but with the added sophistication of handling long-term dependencies and roughness in topic evolution.
We theoretically prove that  the fluctuations in topic and word distributions within cFTM inherently reflect the long-term memory or roughness characteristics of fBm. 
Moreover, we establish that the parameter estimation for cFTM is on par with traditional topic models.
To empirically validate our model, we conduct numerical experiments using real-world datasets, specifically economic news articles. 
These experiments were instrumental in confirming the cFTM's capability to effectively capture and represent the long-term dependencies or roughness in topic evolution, thereby demonstrating its practical applicability and potential benefits in analyzing time-series documents.

\section{Related Work}
Since the introduction of Latent Semantic Indexing~(LSI) by \cite{deerwester1990indexing} and Probablistic LSI~(PLSI) by \cite{hofmann1999probabilistic}, a significant number of researchers have been actively developing various techniques in the field of topic modeling. 
Each of these models differs in terms of modeling capabilities and makes distinct assumptions about the corpus, document representation, and the nature of the topics themselves. 
Over the years, the developed topic models have covered a broad spectrum, catering to diverse use cases. 
The continuous development of new models is an ongoing effort, aiming to fill current research gaps and broaden the range of applications for topic modeling.

The advent of Latent Dirichlet Allocation (LDA\cite{blei2003latent}) marked a pivotal moment in topic modeling, leading to the development of numerous Bayesian Probabilistic Topic Models. 
These models, recognized for their efficiency, became predominant in research circles\cite{abdelrazek2023topic}. 
They are known for their ease of deployment, straightforward interpretation, and modular design that facilitates extension into more complex models. 
Furthermore, their computational efficiency, attributable to a relatively lower number of parameters~\cite{blei2003latent}, has made them a popular choice in various fields.

Topic models can aid in comprehending large document sets, proving beneficial across various disciplines like economics, finance, and sociology. For example, \cite{jeong2019social} demonstrated that topic models can identify latent product topics discussed by customers on social media, thereby quantifying the importance of each product topic. This application exemplifies how topic models can be used to understand public opinions about companies on social media.

\cite{chowdhury2019towards} employed LDA and Non-negative Matrix Factorization~(NMF) models to extract important and emerging concepts related to transportation infrastructure planning. Leveraging these concepts, they developed a preliminary taxonomy of transportation infrastructure planning, showcasing the application of topic models in exploring trends within the transportation sector.

In economics, topic models have been utilized to analyze the historical evolution and changing structures of the field. 
\cite{ambrosino2018topic} constructed a map of the discipline over time, illustrating the technique's potential in studying the shifting structure of economics, particularly during times of potential fragmentation.

In the finance sector, \cite{takano2023text} proposed methods using topic models to extract relevant information about dividend policies from annual securities reports. This approach enhances the accuracy of future dividend forecasts. Similarly, \cite{nakagawa2020good} employed LDA to summarize integrated reports, distinguishing between high and low-quality reports. Furthermore, supervised topic models, which incorporate labeled data \cite{mcauliffe2007supervised}, have been applied in brand perception analysis \cite{manabe2021identification} and stock return prediction \cite{sashida2021stock}, demonstrating their broad utility in the financial domain.

Given the abundance of topic models, there has been a growing need for their summarization and categorization. Several efforts to categorize the extensive range of topic models have been documented~\cite{alghamdi2015survey,abdelrazek2023topic}. \cite{alghamdi2015survey} introduces two primary categories: the first categorizes models into four subgroups, namely LSI, PLSI, LDA, and the Correlated Topic Model~(CTM\cite{blei2007correlated}). 
The second category focuses on the evolution of topic models over time, including models such as the DTM and the continuous-time Dynamic Topic Model~(cDTM)~\cite{wang2008continuous}. 
Our research is in line with the exploration of these evolving topic models.

\section{Preliminary}
\begin{figure*}[t]
  \begin{minipage}[b]{0.19\linewidth}
    \centering
    \includegraphics[keepaspectratio, scale=0.13]{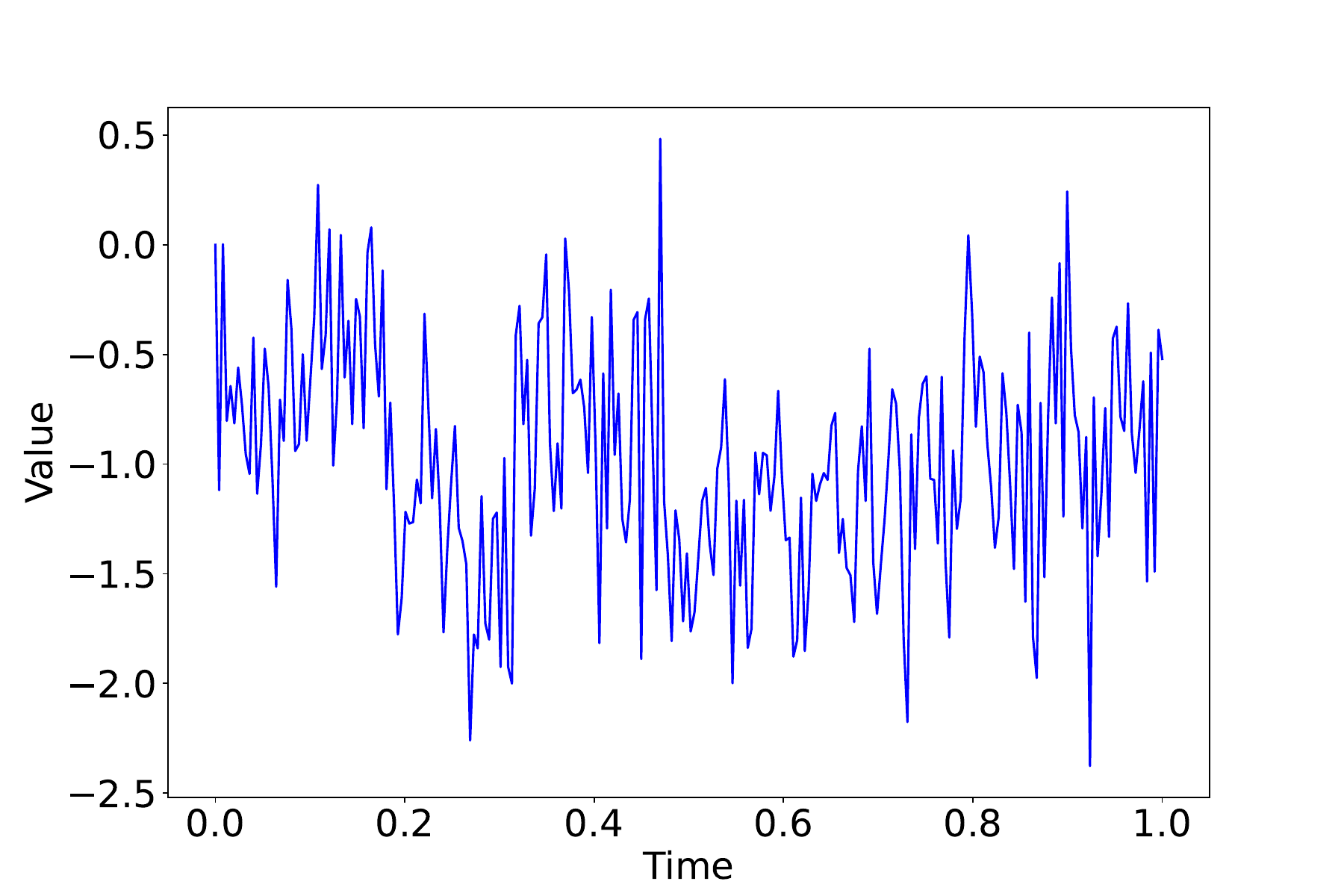}
    \caption{$H=0.1$}
  \end{minipage}
  \begin{minipage}[b]{0.19\linewidth}
    \centering
    \includegraphics[keepaspectratio, scale=0.13]{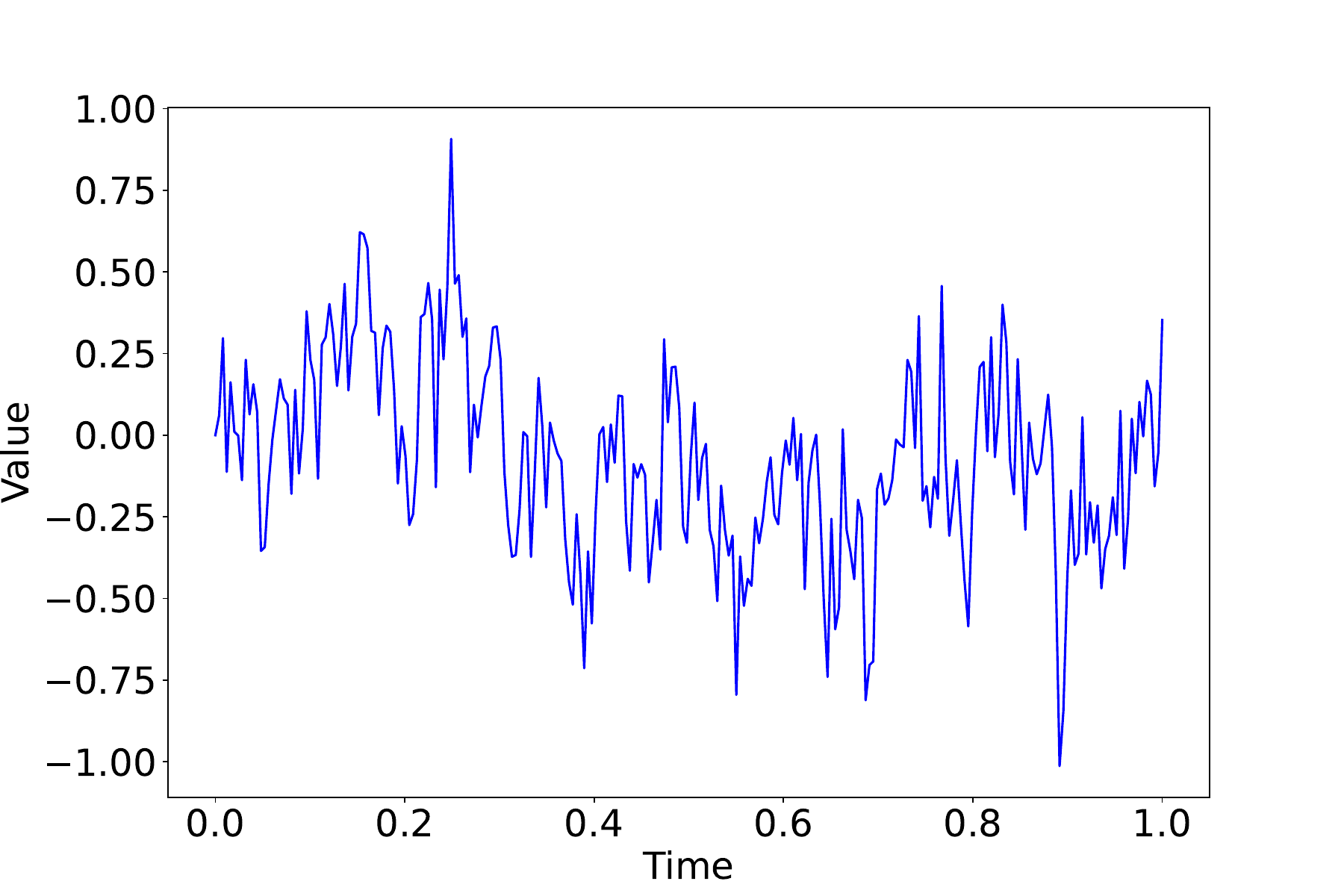}
    \caption{$H=0.25$}
  \end{minipage}
  \begin{minipage}[b]{0.19\linewidth}
    \centering
    \includegraphics[keepaspectratio, scale=0.13]{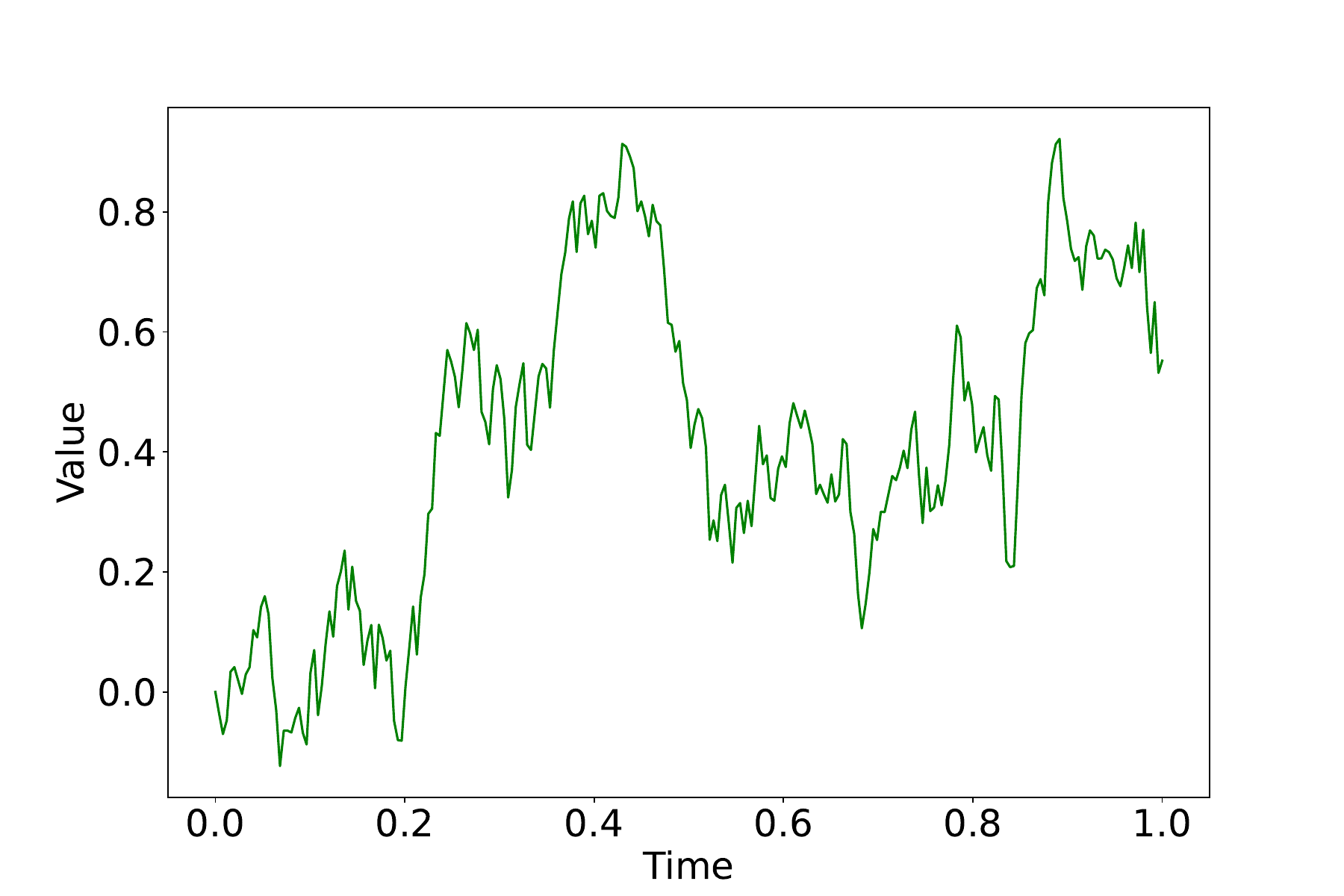}
    \caption{$H=0.5$}
  \end{minipage}
  \begin{minipage}[b]{0.19\linewidth}
    \centering
    \includegraphics[keepaspectratio, scale=0.13]{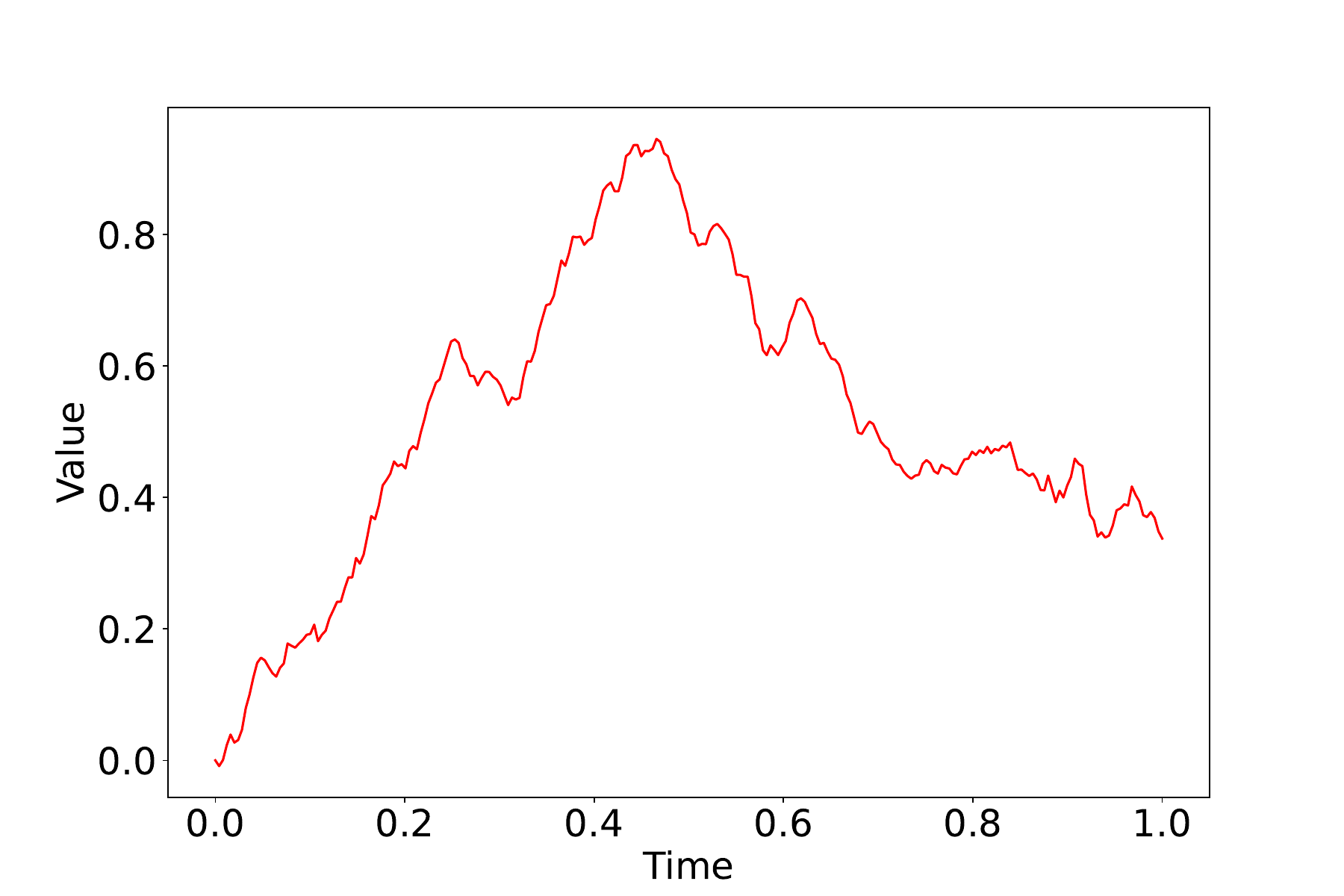}
    \caption{$H=0.75$}
  \end{minipage}
  \begin{minipage}[b]{0.19\linewidth}
    \centering
    \includegraphics[keepaspectratio, scale=0.13]{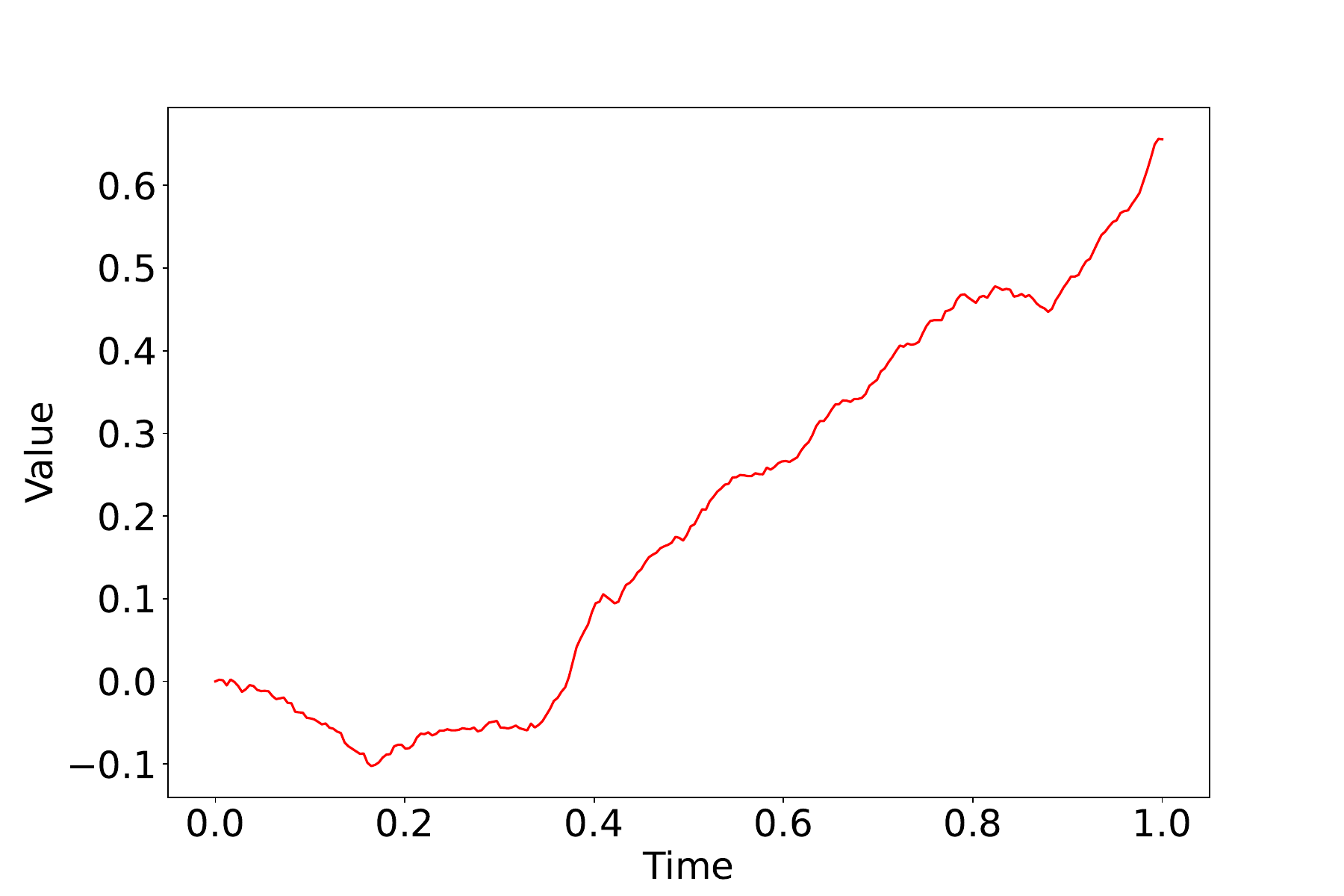}
    \caption{$H=0.9$}
  \end{minipage}
  \caption{Sample paths of fBm with various Hurst indices. The blue lines represent roughness, the green line is Brownian motion, and the red lines represent long-term dependency series.}
  \label{fig:fbm_paths}
\end{figure*}

In this section, we delve into the definition and fundamental properties of fractional Brownian motion~(fBm), a key concept in our study of the cFTM.

\begin{teigi}[fractional Brownian motion]
Fractional Brownian motion~(fBm) is a real-valued, mean-zero Gaussian process used to model the fractal characteristics of time series data. 
Let $H \in (0,1) $ be a fixed number. 
A real-valued mean-zero Gaussian process $$B^H = \{ B^H_t \}_{t \ge 0 } $$ is defined as fBm with Hurst index $H$ if it satisfies $B^H_0 = 0$ almost surely and the covariance function is given by:
 \begin{equation}
\label{eq:fbm_def}
\mathrm{Cov}(B^H_s, B^H_t) 
= \frac{1}{2} (|t|^{2H} +|s|^{2H} - |t-s|^{2H}) 
\end{equation} 
for every $s, t \ge 0 $. 
\end{teigi}
Notably, fBm reduces to standard Brownian motion when $H = \frac{1}{2}$. From fBm, we can derive fractional Gaussian noise (fGn), a discrete process defined as $\{ B^H_{t+1} - B^H_t : t = 0, 1, \ldots \}$. It follows from the definition that increments of fBm are positively correlated if $H > \frac{1}{2}$ and negatively correlated if $H < \frac{1}{2}$. Additionally, fBm possesses stationary increments, as evidenced by the distribution $$B^H_t - B^H_s \sim \mathcal{N}(0,|t-s|^{2H})$$.

Moreover, fBm adheres to the Kolmogorov-Chentsov continuity criterion\cite{bell2015kolmogorov}, implying that almost all paths of fBm with Hurst index $H$ exhibit $(H-\varepsilon)$-H\"{o}lder regularity for any $\varepsilon > 0$. This means there exists a constant $C>0$ such that:
$$
|B^H_t- B^H_s| \le C |t- s|^{H-\varepsilon} $$ for every $s,t \ge 0$. Hence, sample paths of fBm exhibit better regularity as the Hurst index increases.

The fractal properties of fBm, namely self-similarity and long-term dependency or roughness, are also crucial. Self-similarity in fBm is defined as follows:
\begin{teigi}[self-similarity]
A stochastic process $(X_t)_{t\geq0}$ is called self-similar if there exists a real number $P>0$ such that for any $a>0$ the processes $(X_{at})_{t\geq0}$ and $(a^P X_t)_{t\geq0}$ have the same finite dimensional distributions.
\end{teigi}

Lastly, we address the concepts of long-term dependency and roughness:
\begin{teigi}[Long-term dependency and Roughness]

Let $X = \{ X_t \}_{t\ge 0}$ be a (generic) process and we write its increments by $X_{s,t} = X_t- X_s$. We say that the increments of the process $X$ exhibit long-term dependency~(resp. roughness)  if for all $h>0$, 
\[
\sum_{n=1}^\infty | \mathrm{Cov} (X_{0, h}, X_{(n-1)h, nh}) |
\]
is infinite~(resp. finite).
\end{teigi}
According to the relation \eqref{eq:fbm_def}, the increments of fBm exhibit long-term dependency~(resp. roughness) if and only if $H \geq$ ~(resp. $\leq$) 1/2. 
This property can be observed in Figure \ref{fig:fbm_paths} which displays sample paths of fBm with various values of Hurst indices.

\section{Proposed Method}
\subsection{Continuous time fractional topic model}
\begin{table}[t]
\caption{Notations Used in Continuous Time Fractional Topic Model}
\centering
\begin{tabular}{cl}
\hline
\textbf{Symbol} & \textbf{Description} \\
\hline
$\mathbf{K}$ & Set of topics \\
$\mathbf{W}$ & Set of words \\
$H$ & Hurst index \\
$B^{H,(k,w)}_t$ & fBm for topic $k$ and word $w$ at time $t$ \\
$B^{H,(k)}_t$ & fBm for topic $k$ at time $t$ \\
$\alpha_{s_t}$ & Generating parameter for topics at time $t$ \\
$\beta_{s_t}$ & Generating parameter for words at time $t$ \\
$\alpha_0, \beta_0$ & Initial distributions of $\alpha$ and $\beta$ \\
$d\alpha_{s,k}, d\beta_{s,k,w}$ & SDEs for $\alpha$ and $\beta$ \\
$f_{\theta_\alpha}, f_{\theta_\beta},\sigma_\alpha, \sigma_\beta$ & parameters in SDEs \\
$s_t$ & Time stamp in the time sequence \\
$d_{s_t}$ & Document generation process at time stamp $s_t$ \\
$N_{s_t}$ & Total number of words at each time stamp $s_t$ \\
$\phi(\alpha_{s_t}), \phi(\beta_{s_t,z})$ & Categorical distributions for topics and words \\
$\sigma_V$ & Constraint set for categorical distribution parameter $\phi$ \\
$p_{\mathrm{Cat}}(x|\phi)$ & Probability density function for categorical distribution \\
$\phi(\beta)$ & Softmax function \\
$\Phi=(\Phi_\alpha ,\Phi_\beta)$ & Parameters of cFTM \\
$\Phi_\alpha, \Phi_\beta$ & Specific parameter sets for $\alpha$ and $\beta$ \\
$D_{\theta_\alpha}, D_{\theta_\beta}$ & Number of parameters in drift functions \\
$\Theta_\alpha, \Theta_\beta$ & Parameter spaces for $\alpha$ and $\beta$ \\
$L_{s_t}(\Phi)$ & Log-likelihood function at time $t$ \\
$p(w_{s}|\Phi)$ & Probability density function for a word at time $s$ \\
\hline
\end{tabular}
\end{table}

Hereafter, Let $\mathbf{K}$ be the set of topics and $\mathbf{W}$ be the set of words.
The cFTM is designed as a dynamic topic model, where the parameters of the word and topic distributions are driven by fBms and evolve over time.

Let $H\in (0,1)$ denote the Hurst index. Consider the sets of independent fBms, each starting from an initial value of $0$:
\begin{align}
  &\{ B^{H,(k,w)}_t : t\ge 0\}_{ k\in \mathbf{K}, w \in \mathbf{W}}\\
  &\{ B^{H,(k)}_t : t\ge 0\}_{k \in \mathbf{K}}
\end{align}

The generation of topic and word distributions within the cFTM is governed by a set of stochastic differential equations~(SDEs). 
These equations are critical in modeling the temporal evolution of the generating parameters $\alpha_{s_t}$ and $\beta_{s_t}$ for $t=0,\ldots,T$ in the time sequence $\{0=s_0, s_1, \ldots,s_T=T\}$. These are derived from their initial distributions $\alpha_0\in \mathbb{R}^{\mathbf{K}}$ and $\beta_0 \in \mathbb{R}^{\mathbf{K}}\times \mathbb{R}^{\mathbf{W}}$. For each topic $k\in \mathbf{K}$ and word $w\in \mathbf{W}$, the SDEs are defined as follows:

\begin{equation}
\begin{aligned}
& d\alpha_{s,k} = f_{\theta_\alpha}(\alpha_{s,k}) ds
+ \sigma_\alpha dB^{H,(k)}_s , \\ 
& d\beta_{s,k,w} = f_{\theta_\beta} (\beta_{s,k,w}) ds
+ \sigma_\beta dB^{H,(k,w)}_s .
\end{aligned}
\label{eq:alpha_beta_sde}
\end{equation}
where $\sigma_\alpha,\sigma_\beta\in\mathbb{R}$ and $f_{\theta_\alpha}, f_{\theta_\beta}$ are functions with parameters $\theta_\alpha$ and $\theta_\beta$ respectively.
Assume that the SDE of equation \eqref{eq:alpha_beta_sde} has unique solution $\{(\alpha_s,\beta_s):s \ge 0\}$ with good enough properties such that its density is well-defined~\cite{duncan2000stochastic,biagini2008stochastic}.

The design of these SDEs allows the cFTM to capture and replicate the long-term dependency and roughness characteristics inherent in the noise profile of fractional Brownian motion. 
By solving the continuous SDEs over time, the cFTM can model the dynamic topic model with a level of granularity that is both precise and flexible.

By solving the continuous SDE with respect to time, we can model the dynamic topic model with arbitrary granularity as well as \cite{wang2008continuous}.
After solving equation \eqref{eq:alpha_beta_sde}, we use its values at given time stamps $s=s_0,\ldots, s_T$. 

The generating process of documents within the cFTM framework, denoted as $$d_{s_t}=\{ (w^i_{s_t, k_i})\{1 \le i \le N_{s_t}\}:
k_i \in \mathbf{K}, w^i_{s_t,k_i} \in \mathbf{W} \}$$, where $N_{s_t}$ is the total number of words at each time stamp $s_t$, can be written as follows:

\begin{enumerate}
\item At each time stamp $s_t$, generate $\alpha_{s_{t+1},k}$ and $\beta_{s_{t+1},k,w}$ for each topic $k \in \mathbf{K}$ and word $w \in \mathbf{W}$ using the SDEs defined in equation \eqref{eq:alpha_beta_sde}.

\item For each word $w \in \mathbf{W}$ at time stamp $s_t$:
\begin{enumerate}
\item Select a topic $z \in \mathbf{K}$ from the categorical distribution~(topic distribution) $\phi(\alpha_{s_t})$.
\item Then choose a word $w$ from the categorical distribution~(word distribution) $\phi(\beta_{s_t,z})$, where the topic word generation parameter $\beta_{s_t,z} = (\beta_{s_t,z,w}){w \in \mathbf{W}} \in \mathbb{R}^{\mathbf{W}}$.
\begin{align}
& z \sim \mathrm{Categorical}(\phi(\alpha{s_t})), \nonumber \\
& w \sim \mathrm{Categorical}(\phi(\beta_{s_t,z}) ). \nonumber
\end{align}
\end{enumerate}

\end{enumerate}

Here the categorical distribution $\mathrm{Categorical}(\phi)$ with parameter $\phi \in \sigma_V$ where $\sigma_V=\{x\in [0,1]^V: x_1 + \cdots + x_V=1 \}^V$ has a probability density function given by
\begin{equation}
 p_{\mathrm{Cat}}(x|\phi)=\prod_{v=1}^V \phi_v^{x_v}
\end{equation}
on $[0,1]^V$. 
The softmax function $\phi(\beta)$ is defined as
\begin{equation}
 \phi(\beta)v= \frac{\exp(\beta_v)}{\sum{1 \le v \le V} \exp(\beta_v)} .
\end{equation}

The graphical model of our cFTM is shown in Figure \ref{fig:graphical}. 
Here, $\phi^z_s$ and $\phi^w_s$ are the topic and word distributions at time $s$, respectively. 
For simplicity, the contribution of the drift function parameters $\theta_\alpha,\theta_\beta$ is omitted. 

\begin{figure}[t]
    \centering
    \includegraphics[keepaspectratio, width=0.5\textwidth]{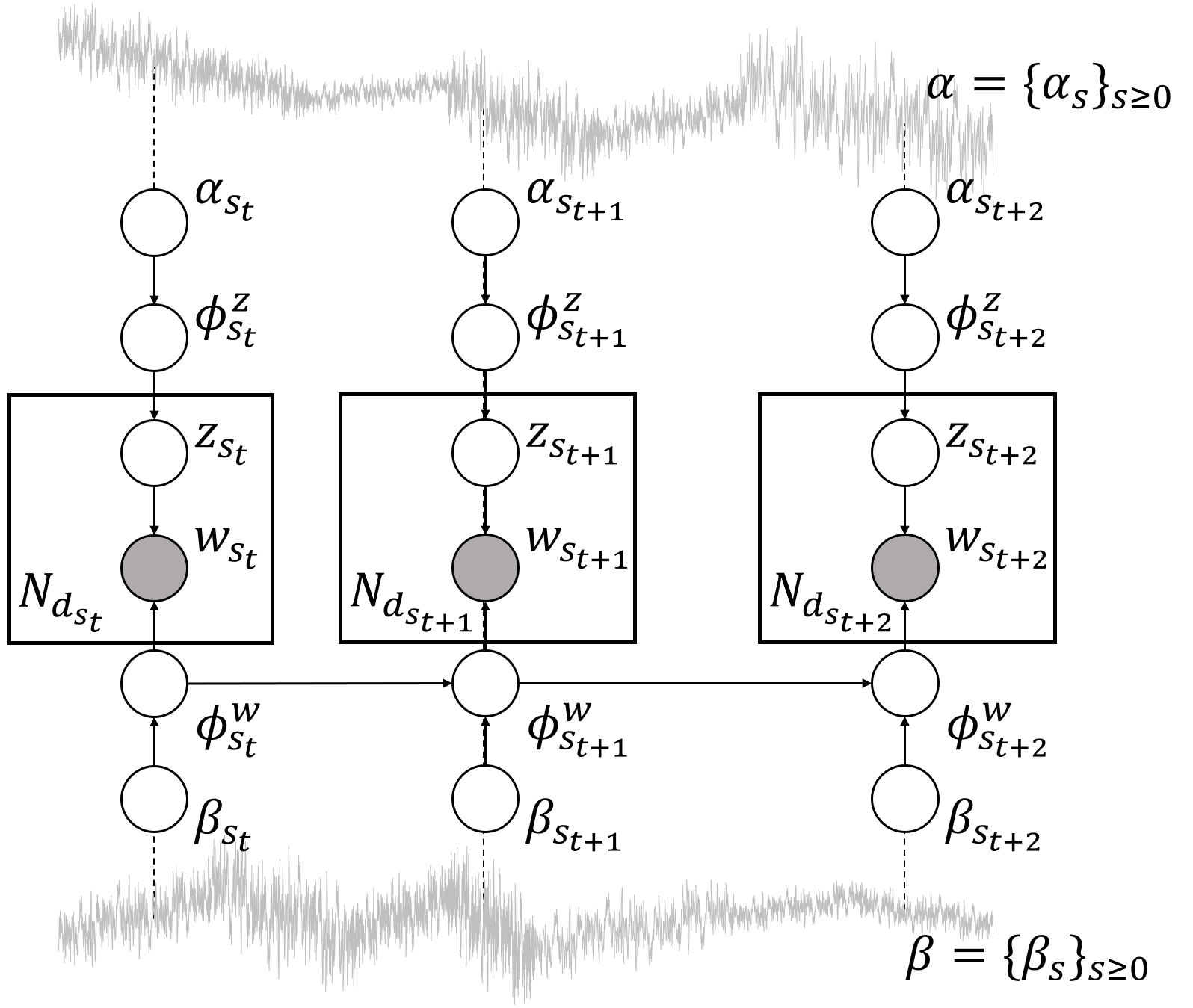}
    \caption{The graphical model of the continuous time fractional Topic Model}
    \label{fig:graphical}
\end{figure}

It is noteworthy that in the cFTM, the topic and word distributions inherently reflect the long-term dependency or roughness characteristic of the driving fBm. 
This intrinsic feature of cFTM enables it to model the complexities of topic and word evolution within documents in a way that mirrors the persistent or intermittent nature observed in real-world time-series data. 
Such a design choice enhances the model's fidelity in capturing the nuanced dynamics of topic and word distributions over time, aligning closely with the stochastic properties imparted by the fBm.
To formally state this essential characteristic of cFTM, we present the following theorem:

\begin{teiri}[long-term dependence or roughness of cFTM]\label{cor_cFTM}
Topic and word distributions in cFTM have long-term dependency or roughness.    
\end{teiri}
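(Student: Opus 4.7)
The strategy is to trace the long-term dependency/roughness of fBm through the SDE \eqref{eq:alpha_beta_sde} to the underlying parameters $\alpha_s,\beta_s$, and then through the softmax to the topic distribution $\phi(\alpha_s)$ and the word distribution $\phi(\beta_{s,z})$. The key observation is that \eqref{eq:alpha_beta_sde} has additive noise, so the stochastic part of the solution is exactly $\sigma_\alpha B^{H,(k)}_s$ (resp.\ $\sigma_\beta B^{H,(k,w)}_s$), added to a drift integral that is absolutely continuous in $s$; moreover, because the drifts decouple the topic (resp.\ word) indices, the processes $\{\alpha_{s,k}\}_{k\in\mathbf{K}}$ and $\{\beta_{s,k,w}\}_{w\in\mathbf{W}}$ are componentwise independent.

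First I would write $\alpha_{s,k}=\alpha_{0,k}+\int_0^s f_{\theta_\alpha}(\alpha_{u,k})\,du+\sigma_\alpha B^{H,(k)}_s$ and decompose any increment $\alpha_{t,k}-\alpha_{s,k}$ into its drift and fBm parts. By bilinearity of covariance, $\mathrm{Cov}(\alpha_{h,k}-\alpha_{0,k},\,\alpha_{nh,k}-\alpha_{(n-1)h,k})$ expands into four terms. The pure fBm-fBm term equals $\sigma_\alpha^2\,\mathrm{Cov}(B^{H,(k)}_h,B^{H,(k)}_{nh}-B^{H,(k)}_{(n-1)h})$, which by \eqref{eq:fbm_def} behaves like $\sigma_\alpha^2 H(2H-1) h^{2H} n^{2H-2}$ for large $n$; summing in $n$ then reproduces the fBm dichotomy of Definition 3, namely divergent if $H>1/2$ and convergent if $H<1/2$. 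The remaining three cross/drift-drift terms all involve the drift integral, whose increments have deterministic size $O(h)$ per interval; using moment bounds on the SDE solution and the Cauchy-Schwarz inequality, they are strictly of smaller order in $n$ than $n^{2H-2}$, hence do not overturn the leading behaviour.

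Next, I would push this up through the softmax. Since $\phi$ is smooth with bounded first and second derivatives on $\mathbb{R}^{|\mathbf{K}|}$, a first-order Taylor expansion yields $\phi(\alpha_t)_k-\phi(\alpha_s)_k=\sum_j \partial_j\phi_k(\alpha_s)\,(\alpha_{t,j}-\alpha_{s,j})+R_{s,t}$ with $R_{s,t}$ quadratic in the increment. Substituting into the covariance at lag $n$, the bilinear leading term inherits the $n^{2H-2}$ asymptotics from the $\alpha$ process (only diagonal $j=j'$ terms survive by componentwise independence), while the remainder contributes only strictly lower-order corrections controlled by the same moment bounds. Applying Definition 3 then gives long-term dependency for $H>1/2$ and roughness for $H<1/2$ in the topic distribution. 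The argument for $\beta_{s,k,w}$ and $\phi(\beta_{s,z})$ is structurally identical and yields the claim for word distributions.

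The main obstacle is the uniform-in-$n$ control of the non-leading contributions: the drift-drift and drift-fBm covariances together with the softmax Taylor remainder must be dominated by a sequence that is strictly smaller than $n^{2H-2}$ when $H>1/2$ and absolutely summable when $H<1/2$. This hinges on moment and tail bounds for the SDE solution, which follow from the well-posedness assumption stated just after \eqref{eq:alpha_beta_sde} together with standard estimates for fBm-driven SDEs with smooth drift. Once these are in hand, the transfer through the globally Lipschitz softmax is routine, and the dichotomy of Definition 3 propagates cleanly from fBm to $\phi(\alpha_s)$ and to $\phi(\beta_{s,z})$.
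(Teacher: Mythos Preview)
Your route is genuinely different from the paper's. The paper does not expand lag-$n$ covariances at all; instead it proves a short transfer lemma: if $Z_t=f(X_t)$ with $f\in C_b^\infty$, then $Z$ inherits the long-term dependency/roughness of $X$. The argument there is a one-line H\"older estimate via the mean value theorem, $\|Z\|_\alpha\le\|f'\|_\infty\|X\|_\alpha$, after which the paper simply quotes Lipschitz continuity of the softmax and concludes. There is no decomposition into drift and noise parts, no Taylor expansion of $\phi$, and no appeal to the $n^{2H-2}$ asymptotics of fGn.

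Your approach attacks Definition~3 head-on: you split the increment covariance into fBm--fBm, drift--fBm and drift--drift pieces, identify the leading $\sigma^2 H(2H-1)h^{2H}n^{2H-2}$ term, and then propagate through $\phi$ by a first-order Taylor expansion with a quadratic remainder. This is more faithful to the covariance-based definition actually stated in the paper, and it makes the mechanism (and the role of componentwise independence of the driving fBm's) explicit. The paper's argument, by contrast, is much shorter but effectively identifies ``roughness'' with H\"older path regularity rather than with summability of increment covariances; it never separately treats the long-range case nor the drift, and is labeled informal. The cost of your route is exactly the obstacle you already name: for $H<1/2$ you need the drift--drift and drift--fBm covariances (and the softmax remainder) to be absolutely summable in $n$, and a bare Cauchy--Schwarz bound on bounded drift increments only gives an $O(1)$ estimate per lag. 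Getting genuine decay there requires decorrelation/mixing estimates for the fBm-driven SDE, which is real work and is not something the paper supplies either.
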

\begin{proof}
    The proof is given in the Appendix.
\end{proof}

\subsection{Parameter estimation}
Suppose documents $$\hat d_{s_t}=\{ \hat w^i_{s_t}\in \mathbf{W}: i=1,\ldots,|\hat{d}_{s_t}| \}~(t= 0,\ldots,T)$$ are observed at $T+1$ time stamps.
The initial distribution of the generated parameters is $\alpha_0\sim \mathcal{N}(\mu_\alpha,\nu_\alpha I_{|\mathbf{K}|})$ and $\beta_0\sim \mathcal{N}(\mu_\beta,\nu_\beta I _{|\mathbf{K}|\times |\mathbf{W}|})$ where $\mathcal{N}(\mu,v)$ denotes the normal distribution of the mean $\mu$ and variance-covariance matrix $v$.

The parameters of the cFTM we focus on are $\Phi=(\Phi_\alpha ,\Phi_\beta)$ where $\Phi_\alpha= (\mu_\alpha,\nu_\alpha,\sigma_\alpha,\theta_\alpha )$ and $\Phi_\alpha= (\mu_\beta,\nu_\beta,\sigma_\beta,\theta_\beta)$.

In this setting, let $D_{\theta_\alpha}$~(resp. $D_{\theta_\beta}$) be the number of parameters of the drift function $f_{\theta_{\theta_\alpha}}$~(resp. $f_{\theta_\beta}$).
The parameter space is $\Theta=\Theta_\alpha\times\Theta_\beta$ where 

\begin{align}
&\Theta_\alpha= 
\annot{\mathbb{R}^{\mathbf{K}}}{$\mu_\alpha$}
\times \annot{\mathbb{R}_+}{$\nu_\alpha$}
\times \annot{\mathbb{R}}{$\sigma_\alpha$} 
\times \annot{\mathbb{R}^{D_{\theta_\alpha}}}{$\theta_\alpha$} \\
&\Theta_\beta= 
\annot{\mathbb{R}^{\mathbf{K}\times\mathbf{W}}}{$\mu_\beta$}
\times \annot{\mathbb{R}_+}{$\nu_\beta$}
\times \annot{\mathbb{R}}{$\sigma_\beta$} 
\times \annot{\mathbb{R}^{D_{\theta_\beta}}}{$\theta_\beta$},
\end{align}
In the following, we maximize the log-likelihood function at each time $t=0,\ldots,T$ for the parameter $\Phi\in \Theta$ as follows.

\begin{equation}
\label{eq:loglikelihood_marginal}
\begin{aligned}
L_{s_t}(\Phi)
=\log p(\hat{d}_{s_t}|\Phi) 
= \sum_{\hat{w}_{s_t} \in \hat{d}_{s_t}}  \log p(\hat{w}_{s_t}|\Phi) 
\end{aligned}
\end{equation}

Here, for each time $s$, the probability density function for each word given the parameters $\Phi=(\Phi_\alpha,\Phi_\beta),\Phi_\alpha=(\mu_\alpha,\nu_\alpha,\sigma_\alpha),\Phi_\beta=(\mu_\beta, \nu_\beta, \sigma_\beta)$ is written as

\begin{equation}
\label{eq:word_probability}
\begin{aligned}
p(w_{s}|\Phi) 
&= \sum_{k\in\mathbf{K}} p(z_s=k|\Phi) p(w_{s,k}=w_{s_t}|\Phi)\\
&= \int_{\mathbb{R}^{\mathbf{K}}} \int_{\mathbb{R}^{\mathbf{K}\times\mathbf{W}}} \sum_{k\in\mathbf{K}} p_{\mathrm{Cat}}(z_s=k|\phi(\alpha_s))\\
&\quad\times p_{\mathrm{Cat}}(w_{s,k}=w_{s}|\phi(\beta_{s,k})) 
\\&\quad\times 
p(\alpha_s|\Phi_\alpha)
p(\beta_{s}|\Phi_\beta)
d\alpha_s d\beta_s .\\
\end{aligned}
\end{equation}
where $p(\alpha_s|\Phi_\alpha)$ and $p(\beta_s|\Phi_\beta)$ are the probability density functions of $\alpha_s$ and $\beta_s$ at time $s$.

The objective of this paper is to confirm that the proposed cFTM method reproduces the long-range dependency and roughness of a topic and word distribution.
We assume a sentence generation process under a simplified setting. The following holds for the maximization of likelihood \eqref{eq:word_probability}.

\begin{teiri}\label{est_cFTM}
    When the drift function is $f_\alpha=0$ and $f_\beta=0$, the optimization of the likelihood \eqref{eq:word_probability} is equivalent to the classical topic model~(LDA) optimization problem.
\end{teiri}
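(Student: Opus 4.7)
The plan is to exploit the fact that, under zero drift, the SDEs in \eqref{eq:alpha_beta_sde} reduce to pure fBm-driven diffusions whose marginal laws at any fixed time stamp are Gaussian. First, I would integrate the SDEs to obtain the closed forms $\alpha_{s,k} = \alpha_{0,k} + \sigma_\alpha B^{H,(k)}_s$ and $\beta_{s,k,w} = \beta_{0,k,w} + \sigma_\beta B^{H,(k,w)}_s$. Combining the Gaussian initial distributions $\alpha_{0}\sim \mathcal{N}(\mu_\alpha,\nu_\alpha I_{|\mathbf{K}|})$ and $\beta_{0}\sim \mathcal{N}(\mu_\beta,\nu_\beta I_{|\mathbf{K}|\times|\mathbf{W}|})$ with the Gaussianity of fBm and the identity $\mathrm{Var}(B^{H}_s)=s^{2H}$ coming from \eqref{eq:fbm_def}, the marginal densities at time $s$ become $p(\alpha_s\mid\Phi_\alpha)=\mathcal{N}(\mu_\alpha,(\nu_\alpha+\sigma_\alpha^2 s^{2H})I_{|\mathbf{K}|})$ and analogously for $\beta_s$, with independence across $(k,w)$ following from the mutual independence of the driving fBms.

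Next, I would substitute these Gaussian marginals into the word-likelihood integral \eqref{eq:word_probability}. At each fixed time stamp $s_t$, the dependence on $\Phi$ enters only through the constants $(\mu_\alpha,\nu_\alpha+\sigma_\alpha^2 s_t^{2H})$ and $(\mu_\beta,\nu_\beta+\sigma_\beta^2 s_t^{2H})$; in particular, no cross-time moments of the fBm appear. Consequently, $L_{s_t}(\Phi)$ in \eqref{eq:loglikelihood_marginal} has exactly the form of the marginal log-likelihood of a static topic model: draw topic and word logits from Gaussians, apply $\phi$ (softmax) to obtain categorical parameters, then sample a topic $z$ and a word $w$. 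Since each $L_{s_t}$ depends only on the one-point marginal of the joint fBm path, the aggregated objective $\sum_t L_{s_t}(\Phi)$ decouples into independent per-time maximum-likelihood subproblems, each of the classical topic-model form that LDA exemplifies.

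The main obstacle is pinning down the sense in which the reduced subproblem is ``equivalent to LDA'': the induced prior on the categorical parameters is logistic-normal (via a softmax of a Gaussian) rather than Dirichlet, so the equivalence is structural rather than literal. I would address this by emphasizing two points: (i) the temporal memory of fBm is completely inert once we restrict attention to the single-time marginals used in \eqref{eq:loglikelihood_marginal}--\eqref{eq:word_probability}, because the vanishing drift removes any coupling between past and future of the diffusion inside the likelihood; and (ii) the resulting per-time problem---marginalize out the latent topic assignment $z$ and the continuous prior parameters to maximize a corpus log-likelihood---has the identical algorithmic shape as the LDA marginal-likelihood maximization, so standard LDA estimation machinery (variational EM, collapsed Gibbs sampling, etc.) applies without modification. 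This places the reduced cFTM squarely within the classical topic-model optimization paradigm and completes the proof.
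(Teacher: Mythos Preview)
Your proposal is correct and follows essentially the same approach as the paper: set the drift to zero, integrate the SDEs to obtain $\alpha_s=\alpha_0+\sigma_\alpha B^H_s$ and $\beta_s=\beta_0+\sigma_\beta B^H_s$, observe that the time-$s$ marginals are Gaussian, and conclude that the likelihood \eqref{eq:word_probability} reduces to a static topic-model objective. Your version is in fact more careful than the paper's (which is a three-line sketch), in particular by making the variance $\nu+\sigma^2 s^{2H}$ explicit and by flagging that the resulting prior is logistic-normal rather than Dirichlet, so the ``equivalence to LDA'' is structural rather than literal---a subtlety the paper does not address.
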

\begin{proof}
    The proof is given in the Appendix.
\end{proof}

Therefore, we apply estimation methods such as variational Bayesian estimation as well as LDA.

\if 0
\begin{figure*}[t]
  \begin{minipage}[b]{0.33\linewidth}
    \centering
    \includegraphics[keepaspectratio, scale=0.19]{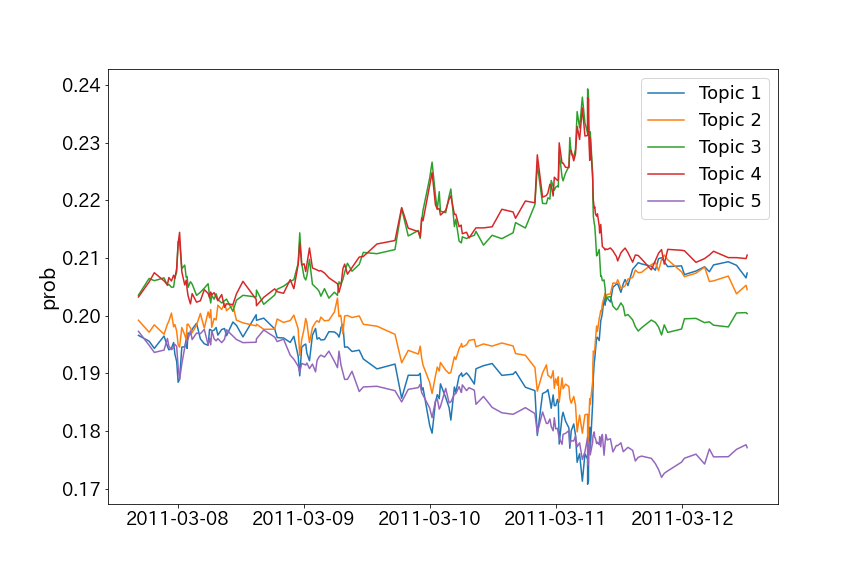}
    \subcaption{$H=0.1$(cFTM with roughness)\newline}
  \end{minipage}
  \begin{minipage}[b]{0.33\linewidth}
    \centering
    \includegraphics[keepaspectratio, scale=0.19]{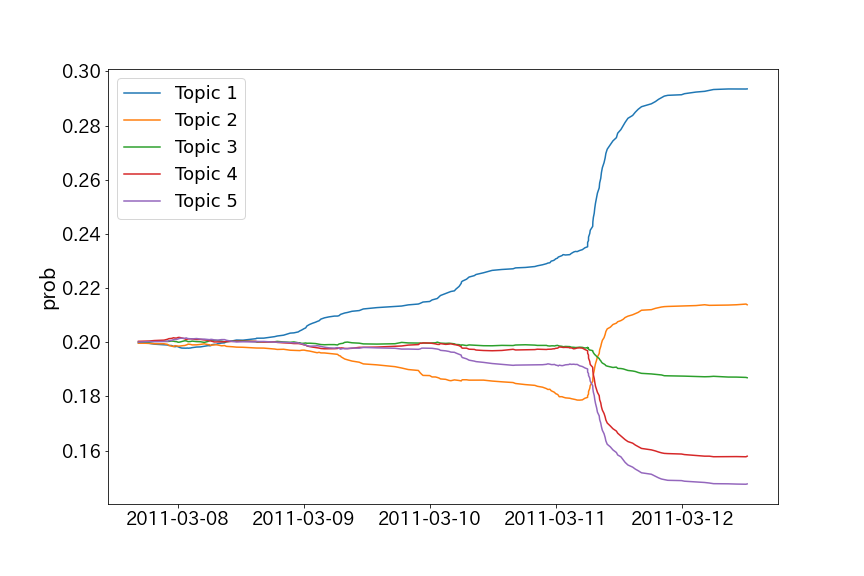}
\subcaption{$H=0.5$(cDTM)\newline
}
  \end{minipage}
  \begin{minipage}[b]{0.33\linewidth}
    \centering
    \includegraphics[keepaspectratio, scale=0.19]{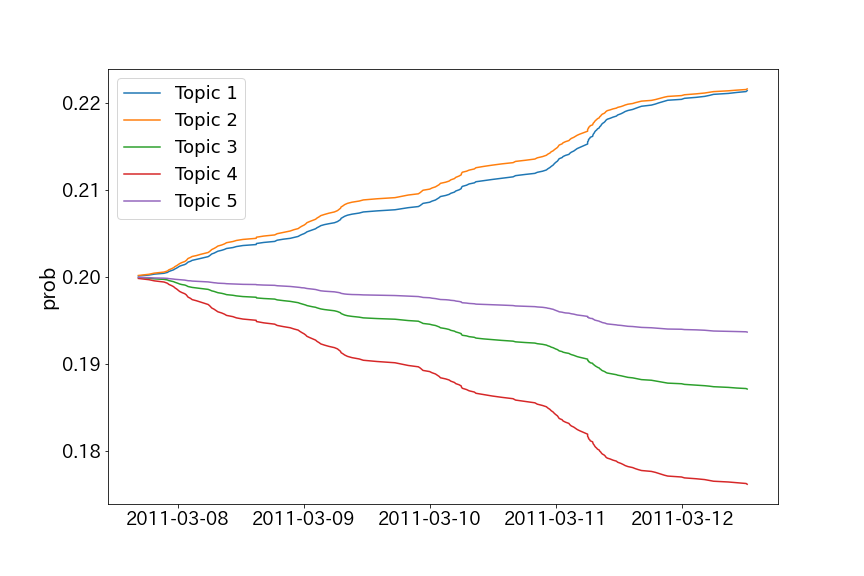}
    \subcaption{$H=0.9$(cFTM with long-term dependency)}
  \end{minipage}
  \caption{Changes in the topic distribution before and after the Great East Japan Earthquake}
  \label{fig:topic_movement}
\end{figure*}
\fi

\begin{figure}[t]
  \begin{minipage}[b]{1\linewidth}
    \centering
    \includegraphics[keepaspectratio, scale=0.27]{figure/topic_01_en.png}
    \subcaption{$H=0.1$ (cFTM with roughness)\newline}
  \end{minipage}
  \begin{minipage}[b]{1\linewidth}
    \centering
    \includegraphics[keepaspectratio, scale=0.27]{figure/topic_05_en.png}
\subcaption{$H=0.5$ (cDTM)\newline
}
  \end{minipage}
  \begin{minipage}[b]{\linewidth}
    \centering
    \includegraphics[keepaspectratio, scale=0.27]{figure/topic_09_en.png}
    \subcaption{$H=0.9$ (cFTM with long-term dependency)}
  \end{minipage}
  \caption{Changes in the topic distribution before and after the Great East Japan Earthquake}
  \label{fig:topic_movement_jpn}
\end{figure}

\begin{figure}[t]
  \begin{minipage}[b]{1\linewidth}
    \centering
    \includegraphics[keepaspectratio, scale=0.27]{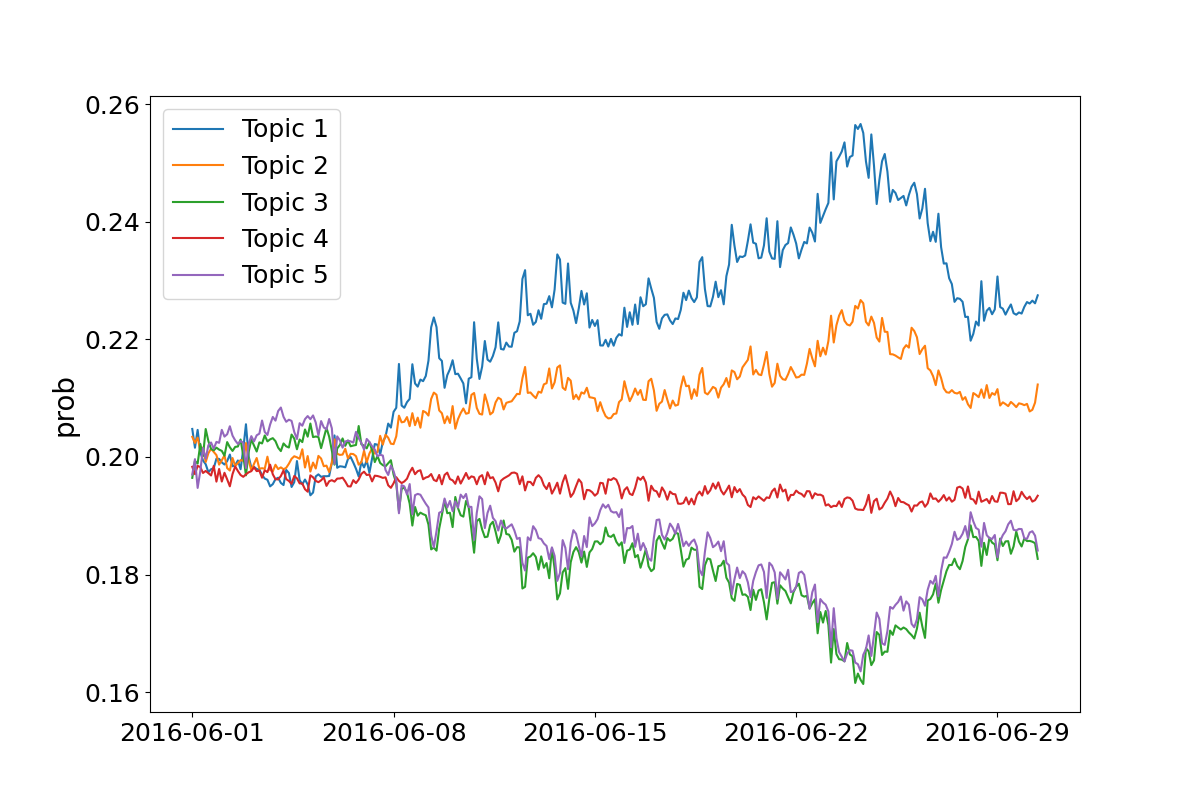}
    \subcaption{$H=0.1$ (cFTM with roughness)\newline}
  \end{minipage}
  \begin{minipage}[b]{1\linewidth}
    \centering
    \includegraphics[keepaspectratio, scale=0.27]{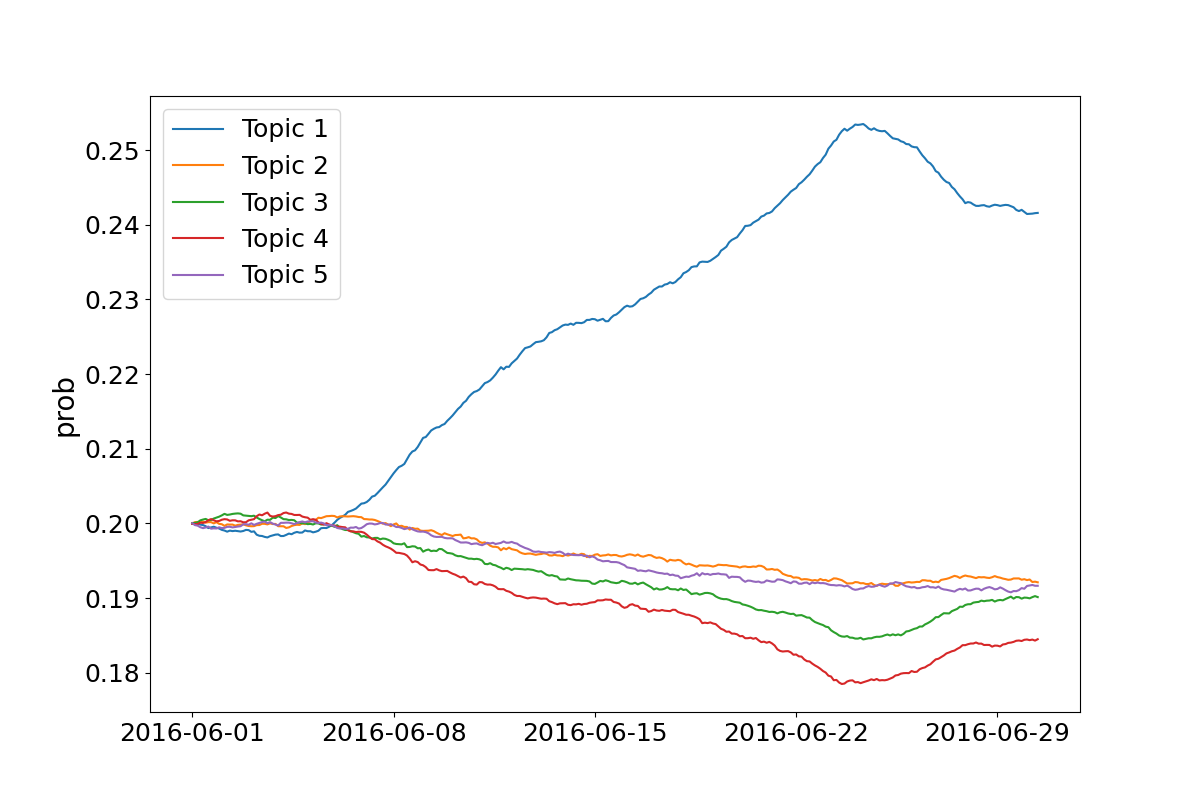}
\subcaption{$H=0.5$ (cDTM)\newline
}
  \end{minipage}
  \begin{minipage}[b]{\linewidth}
    \centering
    \includegraphics[keepaspectratio, scale=0.27]{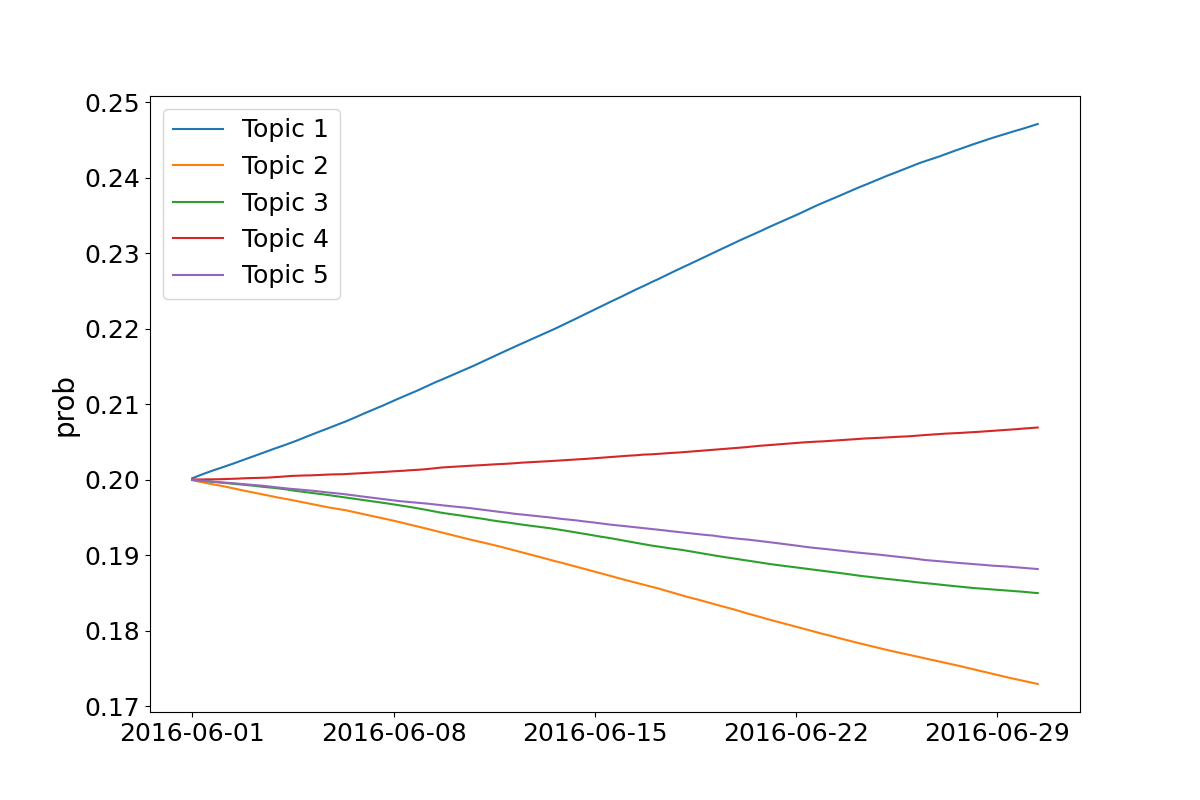}
    \subcaption{$H=0.9$ (cFTM with long-term dependency)}
  \end{minipage}
  \caption{Changes in the topic distribution before and after the Brexit}
  \label{fig:topic_movement_uk}
\end{figure}

\begin{figure}[t]
  \begin{minipage}[b]{\linewidth}
    \centering
    \includegraphics[keepaspectratio, scale=0.27]{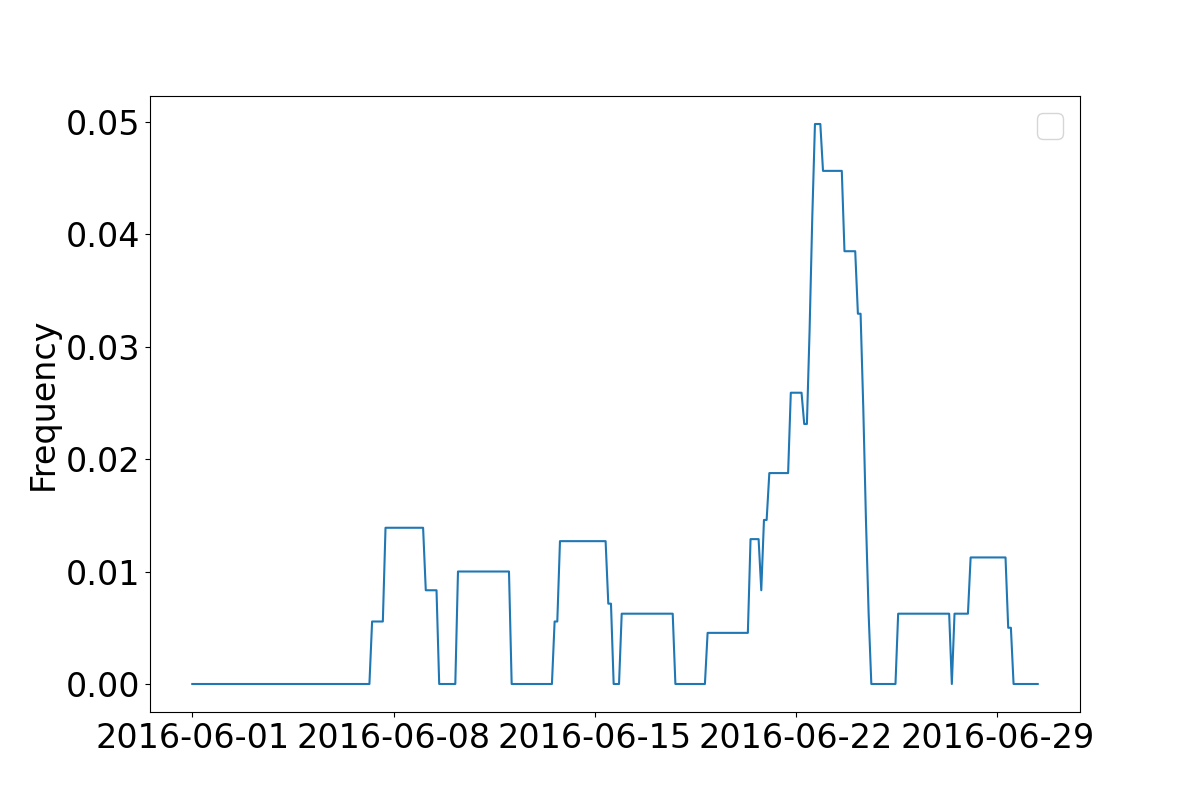}
  \end{minipage}
  \caption{Frequency of appearance of the word "referendum" in the news before and after Brexit}
  \label{fig:freq_brexit}
\end{figure}

\section{Empirical Study}
\subsection{Dataset}
To assess how well our cFTM can extract topics considering long-term dependency or roughness, we carry out a qualitative evaluation. 
This involves analyzing the changes in topic distribution before and after a major event. This method allows us to observe how topics evolve over time and how significant events impact these changes, showcasing the cFTM's ability to capture the dynamics in topic distribution.

To be specific, we apply our cFTM to two major events.
\begin{itemize}
\item Firstly, we estimate topic distributions using news articles from the five days before and after \textit{the Great East Japan Earthquake}. 
\item Secondly, we conducted a similar analysis with news from the 30 days preceding and following \textit{Brexit}. 
\end{itemize}

For our experiment, we selected specific news articles from Reuters\footnote{\url{https://jp.reuters.com/}} to analyze the two major events. 
For \textit{the Great East Japan Earthquake}, we used 178 articles from March 8-12, 2011, covering the five days immediately before and after the earthquake. 
For the \textit{Brexit} event, we chose 316 articles from the period of June 1-30, 2016, which encompasses the 30 days surrounding Brexit. 
These articles were systematically selected at equal intervals from a larger pool of 7,357 articles published during this time. 
This method ensured a comprehensive and balanced representation of news coverage for both events, allowing for a thorough evaluation of our cFTM in tracking topic evolution related to significant global incidents.

\subsection{Experimental setting}
In our cFTM, we employ a Bag of Words(BoW) format, focusing on extracting nouns from news articles as input features. 
To refine the model's accuracy, we filter out words based on frequency: words appearing in fewer than 5 news articles or in more than 50\% of the articles are excluded. This process helps eliminate overly common or rare words that may not contribute significantly to topic analysis.

For estimating the cFTM model, we use Markov chain Monte Carlo~(MCMC\cite{gamerman2006markov}) methods. Given the brief duration of our target period, only 5 days, we anticipate that the topic distribution would undergo changes during this period, while the word distribution within each topic would remain relatively stable. 
Consequently, we allow only the topic distribution generation parameter, \(\alpha\), to vary over time.

To explore different dynamics of topic evolution, we adjust the Hurst index \(H\), a crucial hyperparameter in our model, to three distinct values: 0.1, 0.5, and 0.9. 
These values represent different characteristics of topic behavior: \(H=0.1\) indicates a high level of roughness in topic evolution, \(H=0.5\) simulates Brownian motion-like behavior (akin to a previous cDTM), and \(H=0.9\) suggests strong long-term dependency in topic distribution(see also Figure \ref{fig:fbm_paths}). 
We also set the number of topics, \(K\), to 5, allowing for a detailed yet manageable examination of topic distribution changes.

\subsection{Results}
Figure \ref{fig:topic_movement_jpn} illustrates the transition of topic distribution paths modeled by our proposed cFTM around \textit{the Great East Japan Earthquake}. 
When the Hurst index \(H\) is set high, at \(H=0.9\), the model shows no significant changes in topic transitions before and after the event, indicating that each topic maintains a positive correlation or long-term dependency. 
In contrast, with a lower Hurst index setting at \(H=0.1\), a marked shift in topic distributions is observed following the event. The case with \(H=0.5\), which aligns with the cDTM scenario, falls between these two extremes. 
This demonstrates that by manipulating the Hurst index, our proposed model can effectively capture the trends of topics characterized by either long-term dependency or roughness.

For \(H=0.1\), our model effectively captures shifts in topic distribution due to an increase in related news coverage. 
Specifically, the five news topics identified by our model can be interpreted as follows: topic 1 (represented by a blue line) corresponds to Japan-related news, topic 2 (orange line) to the Japanese stock market, topic 3 (green line) to foreign news excluding Japan, topic 4 (red line) to market conditions in countries other than Japan, and topic 5 (purple line) to China-related news. 
Prior to the event, the dominant topics pertain to foreign news and market conditions, with a focus on issues such as the uncertain situation in Libya and trends in China's government and economy. 
However, after the earthquake, there is a marked increase in topics related to Japanese news and the Japanese stock market, reflecting the significant impact of the earthquake\cite{ferreira2015earthquakes}.

Figure \ref{fig:topic_movement_uk} displays the transitions of topics estimated by our cFTM surrounding the \textit{Brexit} event. 
Additionally, Figure \ref{fig:freq_brexit} presents the frequency of the term "referendum" in news articles from the same period. 
Specifically, Topics 1 and 2 at \(H=0.1\), and Topic 1 at \(H=0.5\) and \(H=0.9\) are indicative of news about the referendum to leave the European Union.

As illustrated in Figure \ref{fig:freq_brexit}, news coverage related to the Brexit referendum progressively increased leading up to the referendum date as well as \cite{krzyzanowski2019brexit}. 
Following the decision to leave the EU, there was a sharp decline in articles mentioning the referendum. Reflecting these trends, our model shows that at \(H=0.1\), the probability of Brexit-related topics decreased rapidly after the decision. 
In contrast, with higher values of \(H\), the model maintained the topic probability, indicating the perception of Brexit news as a long-term trend in articles.

These results demonstrate that our proposed method can aptly replicate the properties of fBm with varying Hurst indices in the topic distribution. 
This capability aligns with the claims presented in Theorem~\ref{est_cFTM}, underscoring the efficacy of our cFTM in modeling dynamic topic distributions that reflect the inherent characteristics of fBm.

\section{Conclusion}
The contributions of this study are as follows:
\begin{itemize}
    \item We generalize the cDTM and propose a continuous-time fractional topic model~(cFTM) that takes into account the long-term dependency and roughness of the increments of the generated topic and word parameters.
    \item We show that the evolving process of topic and word distribution of cFTM has long-term memory or roughness and the parameter estimation of the proposed method is the same as that of the topic model.
    \item Numerical experiments using actual economic news data confirm that the proposed method is able to capture the roughness of topic dynamics.
\end{itemize}

The introduction of the cFTM in this study represents a significant advancement in the field of topic modeling and natural language processing. Our research holds several broader impacts, both technically and practically, in various domains such as economics, finance, and sociology described in the Related Work section.

The limitation of our study is that the drift term was not considered because the main objective was to reproduce the long-term dependency or roughness of the topic~(word) distribution.
In addition, since fBm is generally not independently incremental, we cannot efficiently compute the posterior distribution of the topic~(word) distribution based on the Kalman Filter as well as the \cite{blei2006dynamic,wang2008continuous}.

For further study, we will consider efficient calculation methods for computing the posterior distribution of topic or word distributions.
Another direction is to replace the drift term with ODE-Net\cite{chen2018neural}, SDE-Net\cite{kong2020sde} or fractional SDE-Net\cite{hayashi2022fractional}), which are trained using a nonlinear neural net function.


\appendix

\section{Appendix:Proof of Theorems}
\subsection{Proof of Theorem \ref{cor_cFTM}}
To show Theorem \ref{cor_cFTM}, we first prove the following lemma.
\begin{hodai}(Informal)
Two processes $X=\{X_t:t\ge 0\}$ and $Z=\{Z_t :t \ge 0\}$ with $Z_t=f(X_t)$ have the same long-term dependency and roughness measured by $H$ when $f$ is a smooth function whose derivatives are bounded.
\end{hodai}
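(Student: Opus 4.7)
The plan is to reduce the covariance structure of the $Z$-increments to that of the $X$-increments via a first-order Taylor expansion of $f$, and then verify that the resulting remainder terms cannot affect the convergence of the defining series. Since $f$ is smooth with bounded derivatives, Taylor's theorem gives
\begin{equation*}
Z_t - Z_s = f'(X_s)(X_t - X_s) + R_{s,t}, \qquad |R_{s,t}| \le \tfrac{1}{2}\|f''\|_\infty (X_t - X_s)^2.
\end{equation*}
Expanding $\mathrm{Cov}(Z_{0,h}, Z_{(n-1)h,nh})$ bilinearly in this substitution decomposes it into a main term
\begin{equation*}
A_n := \mathrm{Cov}\bigl(f'(X_0)X_{0,h},\, f'(X_{(n-1)h})X_{(n-1)h,nh}\bigr)
\end{equation*}
and three remainder terms, each containing at least one factor of $R_{\cdot,\cdot}$.

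For the remainder terms I would apply Cauchy--Schwarz together with $\|f'\|_\infty, \|f''\|_\infty < \infty$ and the fBm-type moment bound $\mathbb{E}[(X_t-X_s)^{2k}] = O(|t-s|^{2Hk})$ inherited from the driving fBm. This produces upper bounds whose decay in $n$ is strictly faster than the $n^{2H-2}$ rate of fBm increment correlations, so the remainder contributions form an absolutely convergent series for every $H \in (0,1)$ and therefore cannot change whether $\sum_n |\mathrm{Cov}(Z_{0,h}, Z_{(n-1)h,nh})|$ is finite or infinite. It therefore suffices to analyse $A_n$. The goal for $A_n$ is to sandwich $|A_n|$ between two positive multiples of $|\mathrm{Cov}(X_{0,h}, X_{(n-1)h,nh})|$, using boundedness of $f'$ on the upper side and, on the lower side, a Hermite (Wiener chaos) expansion of $f'(X_\cdot)$ whose first-order term produces a nonzero constant multiple of the fBm increment covariance while higher-order Hermite terms contribute polynomially faster-decaying correlations that are absorbed into the summable remainder.

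The main obstacle is this last decoupling: $f'(X_0)$ and $X_{0,h}$ are not independent (both are measurable with respect to the same underlying Gaussian noise), so $A_n$ does not factor into a product of marginal expectations. The Hermite-expansion route circumvents this by representing $f'(X_\cdot)$ as an orthogonal series contracted against the fBm covariance kernel, so that the leading contribution to $A_n$ is controlled by $\mathrm{Cov}(X_{0,h}, X_{(n-1)h,nh})$ itself and all corrections are summable. This reduction relies on Gaussianity of $X$, which in the cFTM setting is precisely the drift-free regime underlying Theorem~\ref{est_cFTM}, making it a natural assumption to carry through the argument.
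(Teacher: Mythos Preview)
Your approach is genuinely different from the paper's and considerably more ambitious. The paper does not touch the covariance series $\sum_n|\mathrm{Cov}(Z_{0,h},Z_{(n-1)h,nh})|$ at all: it works purely at the pathwise level, invoking the mean value theorem to write $f(X_t)-f(X_s)=f'(\xi)(X_t-X_s)$ and then bounding the $\alpha$-H\"older seminorm of $Z$ by $\|f'\|_\infty$ times that of $X$. In effect the paper reinterprets ``roughness measured by $H$'' as preservation of the sample-path H\"older exponent, and the entire argument is the one-line estimate $\|Z\|_\alpha\le\|f'\|_\infty\|X\|_\alpha$; the long-term dependency half is announced but never actually treated.

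What your route buys is fidelity to Definition~3 as written: you are really trying to control the summability of the increment-covariance series, which the paper's H\"older argument does not address. What the paper's route buys is extreme brevity and no reliance on Gaussianity or chaos expansions. Two points worth flagging in your sketch: first, the ``upper bound by $\|f'\|_\infty$'' for $A_n$ is not as immediate as stated, since $\mathrm{Cov}(f'(X_0)X_{0,h},\,f'(X_{(n-1)h})X_{(n-1)h,nh})$ does not factor through $\|f'\|_\infty^2\,\mathrm{Cov}(X_{0,h},X_{(n-1)h,nh})$ without further work---the Hermite expansion is needed on both sides, not just the lower one. Second, the lower bound requires the leading Hermite coefficient of $f'$ to be nonzero; if it vanishes, the dominant term in $A_n$ decays like a higher power of the fBm covariance and long memory can genuinely be lost. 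Given that the lemma is explicitly labelled ``Informal'' this caveat is acceptable, but it should be stated. Within the drift-free Gaussian regime you invoke, your outline is in fact closer to a proof of the stated definition than what the paper supplies.
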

\begin{proof}
We begin with the proof for the long-term dependency. 
Suppose the process $X$ exhibits the long-term dependency: 
Now, we are in a position to show that the $C_b^\infty$-function inherits the regularity of the original paths. 
Let $X_t$ has an $\alpha$-H\"{o}lder for some $\alpha\in(0,1)$. 
A function \( f: X \to Y \) is said to be \(\alpha\)-Hölder continuous if there exists a constant \( C \geq 0 \) such that for all \( x, y \in X \), the following inequality holds:
\[
|f(x) - f(y)| \leq C \cdot |x - y|^\alpha
\]
Here, \( |x - y| \) denotes the distance between \( x \) and \( y \) in space \( X \), and \( |f(x) - f(y)| \) is the distance between \( f(x) \) and \( f(y) \) in space \( Y \). The constant \( C \) is a non-negative real number, and the exponent \( \alpha \) characterizes the degree of smoothness.

Note that by Taylor's theorem, for each $x,y\in\mathbb{R}$, there exists a constant $\delta\in(0,1)$ between $x$ and $y$ such that 
\begin{equation*}
f(x)-f(y) = f^\prime(\delta y) (x-y) . 
\end{equation*}
Then, the $\alpha$-H\"{o}lder semi-norm of $Z_t$ is estimated as 
\begin{equation*}
\begin{aligned}
\| Z \|_\alpha 
&= \sup_{s\neq t} \frac{| f(X_t) -f(X_s)|}{|t-s|^{\alpha}}\\
&\le 
\|f^\prime\|_\infty \sup_{s\neq t} \frac{| X_t -X_s|}{|t-s|^{\alpha}}.
\end{aligned}
\end{equation*}
Hence the transformed process $Z$ exhibits the same roughness as the original one. 
\end{proof}

The softmax function $\phi(\beta)$ is Lipschitz continuous~(proposition 4 of \cite{gao2017properties}) and the Lipschitz continuous functions have bounded first-order derivatives.
That is, for all $z, z^{'} \in \mathbb{R}^n$, $$||\phi(z)-\phi(z^{'})||_2 < \lambda |z-z^{'}|$$ where $\lambda$ is the Lipschitz constant.

Thus, from the above Lemma, cFTM has long-term dependence or roughness measured by the Hurst index $H$ because the two stochastic processes $X_t$ and $Z_t = f(X_t)$ have the same long-term dependency or roughness where the softmax function is a smooth function with bounded first-order derivatives.

\subsection{Proof of Theorem \ref{est_cFTM}}
Assume that the drift function is $f_\alpha=0$ and $f_\beta=0$. 
The equation~\eqref{eq:alpha_beta_sde} can be solved as follows:

\begin{equation*}
\begin{aligned}
    \alpha_s = \alpha_0 + \delta B^H_s,\quad
    \beta_s = \beta_0 + \sigma B^H_s.
\end{aligned}
\end{equation*}
Then, the distribution of $\alpha_s$ and $\beta_s$ at time $s$ can be explicitly expressed using the Gaussian distribution.
This is equivalent to the classical~(i.e., stationary with no time evolution) topic model optimization problem, and thus the optimization of the likelihood \eqref{eq:word_probability} is equivalent to the topic model optimization problem.

\end{document}